\documentclass[conference]{IEEEtran}

\usepackage{amsmath, amsthm, amsfonts,amssymb,euscript, graphicx,epsfig,enumerate,float,afterpage, subfigure, ifthen, moreverb, algpseudocode,algorithm}
\usepackage{url}
\usepackage{hyperref}
\usepackage{tikz}
\usepackage{placeins}
\usepackage{subfigure}
\usetikzlibrary{arrows.meta, positioning, calc}
\usepackage{xcolor}

\newtheorem{thm}{Theorem}

\newtheorem{lem}{Lemma}

\newtheorem{assumption}{Assumption}

\newcommand{\norm}[1]{||{#1}||}

\newcommand{\curl}[1]{\left\{#1\right\}}
\newcommand{\vect}[1]{\mathbf{#1}}
\newcommand{\parn}[1]{\left(#1\right)}

\newcommand{\sqr}[1]{\left[#1\right]}
\newcommand{\bars}[1]{\left|#1\right|}

\begin{document}

\title{\huge{Epidemiological Causal Graph Identification: Challenges, Identifiability and Algorithms}}

\author{Sambit~Mishra$^*$, Yingying~Wang$^\dagger$, Christine~K.~Johnson$^\dagger$, and Urbashi~Mitra$^*$\\
$^*$ University of Southern California, $^\dagger$ University of California, Davis
\thanks{This work has been funded by one or all of the following grants: ARO W911NF1910269, ARO W911NF2410094, ONR N00014-22-1-2363, NSF CIF-2311653, NSF CIF-2148313, NSF RINGS-2148313, NSF DBI-2412522, and is also supported in part by funds from federal agencies and industry partners as specified in the RINGS program. }}

\maketitle

\begin{abstract}
Causal discovery from observational data is fundamental to statistics and machine learning, yet determining causal direction without interventions necessitates structural assumptions. Existing identifiability research primarily focuses on continuous variables under additive noise models, often neglecting mixed datasets containing ordinal scales, counts, and continuous measurements. This paper investigates causal discovery in Directed Acyclic Graphs (DAGs) where nodes follow either an ordinal distribution (via an ordered logit model) or a regular one-parameter exponential family distribution. We prove that the edge direction between an ordinal and an exponential family node is distributionally identifiable for generic parameter values. Our findings generalize previous Ordinal-Poisson results to the broader exponential family. Computationally, we introduce a score-based exhaustive search and a masked continuous optimization framework using DAGMA for larger graphs. Numerical results validate the theory, recovering edge orientations within a Markov equivalence class that are unidentifiable under classical structural equation models.
\end{abstract}

\begin{IEEEkeywords}
Causal Discovery, Directed Acyclic Graphs, Identifiability, Exponential Family, Ordinal Distribution, Structure Learning
\end{IEEEkeywords}
\IEEEpeerreviewmaketitle

\section{Introduction}

Causal reasoning distinguishes the variables that produce an outcome from those that merely correlate with it. Causal directed acyclic graphs (DAGs) \cite{pearl2009causality} compactly encode these relationships, using nodes for random variables and directed edges for causal influence. We focus on discovery from observations, where the key challenge is that multiple DAGs can induce the same joint distribution, forming a Markov equivalence class (MEC) \cite{peters2017elements}. Constraint-based algorithms such as PC \cite{spirtes2000causation} recover the MEC but cannot orient edges outside V-structures; going beyond the MEC requires model assumptions that break this symmetry. Identifiability has been shown for linear Gaussian structural equation models (SEMs) with equal noise variances \cite{peters2014identifiability}, linear non-Gaussian SEMs \cite{shimizu2006linear}, and non-linear additive noise models \cite{peters2014causal}.

This prior work focuses on continuous valued random variables and additive noise models, a mold that real observational datasets rarely fit. Our current work is motivated by causal discovery in epidemiology, where ordinal, binary, and counting data appear alongside continuous-valued measurements \cite{olival2017host, johnson2020global, kreuder2015spillover}. Forcing these into a Gaussian additive-noise model introduces mis-specifications that invalidate both identifiability guarantees and downstream inferences.

A key contribution is the \emph{structured statistical model} (SSM), which generalizes structural equation models and appears to induce stronger identifiability properties. We leverage prior identifiability work for specific {\bf discrete} families: Poisson DAGs \cite{park2015learning}, extended to the generalized hypergeometric family in \cite{park2019identifiability}, and ordinal DAGs \cite{ni2022ordinal}. We generalize our prior work in \cite{shaska2025ordinal}, which considered mixed ordinal and Poisson nodes.

We also adapt key algorithms to the SSM. Since the DAG space grows super-exponentially with the number of nodes \cite{chickering2002optimal}, methods such as NOTEARS \cite{zheng2018dags}, GOLEM \cite{ng2020role}, DAGMA \cite{bello2022dagma}, and SURE-Ridge \cite{mishra2026causal} reformulate DAG learning as a continuous optimization problem with a smooth acyclicity characterization or regress-and-threshold style DAG learning, but focus on linear or additive noise models.

The contributions of this work are as follows:
\begin{enumerate}
\item We introduce the \emph{structured statistical model} (SSM), which generalizes structural equation models.
    \item We prove that the edge between an ordinal node and a regular one-parameter exponential family node in a bivariate SSM is distributionally identifiable for generic parameter values, subsuming \cite{shaska2025ordinal}.
    \item We develop a masked DAGMA procedure with joint cutpoint and weight optimization for graphs beyond the reach of exhaustive search.
    \item We validate the framework on three-node and $50$-node experiments, where the normalized structural Hamming distance (SHD) converges to zero with sample size, including correct orientation of edges within the MEC.
\end{enumerate}

\section{Background}
\label{sec:bg}

We consider a causal directed acyclic graph (DAG) with $d$ nodes, denoted by
$\mathcal{G} = \parn{\mathcal{V}, \mathcal{E}}$, where
$\mathcal{V} = \curl{1, \dots, d}$ is the vertex set and
$\mathcal{E} = \curl{ \parn{i,j} : i \rightarrow j }$ is the edge set.
The graph $\mathcal{G}$ is represented by a deterministic weighted adjacency
matrix $\mathbf{W}_{\mathcal{G}} \in \mathbb{R}^{d \times d}$ whose $\parn{i,j}$ entry is nonzero if and only if $\parn{i,j} \in \mathcal{E}$. Each node $i$ is associated with a random variable $X_i$ and parent set $\mathcal{P}\parn{i} = \curl{k : \parn{k,i} \in \mathcal{E}}$, and we write $\vect{x} = \parn{X_1, \dots, X_d}$.
For each node, the conditional distribution
$p\!\parn{ X_i \mid \mathbf{x}_{\mathcal{P}\parn{i}} }$
belongs to a known parametric family whose parameters are deterministic
functions of the parent variables. \emph{This generalizes the structural equation models typically seen in causal inference; we call it our \textbf{structured statistical model (SSM)}.}
The observed data matrix $\mathbf{X} \in \mathbb{R}^{n \times d}$ contains $n$ independent realizations of $\vect{x}$.

\subsection{Ordinal Nodes}
Let $i \in \mathcal{V}$ correspond to an ordinal random variable $X_i$ with finite ordered support $\mathcal{X}_i = \curl{1, \dots, s}$, $s \geq 2$.
If $\mathcal{P}\parn{i} = \emptyset$, $X_i$ is categorical with $p\parn{X_i = x} = \pi_{i,x}$. Otherwise, we adopt a cumulative-link (ordinal regression) model. Let $\vect{w}_i := \sqr{\vect{W}_{\mathcal{G}}}_{\mathcal{P}\parn{i},\, i}$ denote the causal coefficients from the parents of node $i$, with parent vector $\vect{x}_{\mathcal{P}\parn{i}}$, and introduce strictly ordered cutpoints $-\infty = \gamma_{i,0} < \gamma_{i,1} < \cdots < \gamma_{i,s-1} < \gamma_{i,s} = +\infty$.
The conditional probability mass function is
\begin{align}
p\parn{X_i = x \mid \vect{x}_{\mathcal{P}\parn{i}}}
&=
f\!\parn{
\gamma_{i,x} -
\vect{w}_i^{\mathsf T}\vect{x}_{\mathcal{P}\parn{i}}
}
\nonumber \\
&\quad {}
-
f\!\parn{
\gamma_{i,x-1} -
\vect{w}_i^{\mathsf T}\vect{x}_{\mathcal{P}\parn{i}}
}.
\end{align}
Throughout this work, we adopt the ordered logit model \cite{mccullagh1980regression}, $f\parn{u} = \sigma\parn{u} = 1/\parn{1 + e^{-u}}$; location shifts are absorbed into the cutpoints.

\subsection{Exponential-Family Nodes}
Let $i \in \mathcal{V}$ correspond to a random variable $X_i$ whose conditional
distribution belongs to a regular one-parameter exponential family with known
sufficient statistic $T_i:\mathcal{X}_i \to \mathbb{R}$ and natural parameter space $\Omega \subseteq \mathbb{R}$.
The support $\mathcal{X}_i \subseteq \mathbb{R}$ may be discrete or continuous. If $\mathcal{P}\parn{i} = \emptyset$, the natural parameter is a fixed $\eta_i \in \Omega$. Otherwise, we model it as a deterministic
function of a linear predictor:
\[
\vect{w}_i :=
\sqr{\vect{W}_{\mathcal{G}}}_{\mathcal{P}\parn{i},\, i},
\qquad
\eta_i\parn{\vect{x}_{\mathcal{P}\parn{i}}}
=
g_i\!\parn{
\vect{w}_i^{\mathsf T}\vect{x}_{\mathcal{P}\parn{i}}
},
\]
where $g_i:\mathbb{R} \to \Omega$ is a known, strictly monotone link function.
The conditional distribution becomes
\begin{align}
&p_{X_i \mid \vect{x}_{\mathcal{P}\parn{i}}}\parn{x}
= \nonumber \\
&H_i\parn{x}
\exp\!\parn{
\eta_i\parn{\vect{x}_{\mathcal{P}\parn{i}}} T_i\parn{x}
-
a_i\!\parn{
\eta_i\parn{\vect{x}_{\mathcal{P}\parn{i}}}
}
},
\end{align}
where $H_i\parn{x} > 0$ is the base measure and $a_i\parn{\cdot}$ is the log-partition function.

\section{Identifiability}
\label{sec:id}

\begin{table*}[t]
\centering
\caption{Regular One-Parameter Exponential Family Distributions}
\label{tab:exp_fam_dists}
\begin{tabular*}{\textwidth}{@{\extracolsep{\fill}} l l l l l@{}}
\hline
\textbf{Distribution} & \textbf{Natural Parameter} & \textbf{Link Function} & \textbf{Suff. Stat.} & \textbf{Support} \\
& $\eta$ & $\eta = g\parn{\vect{w}_i^{\mathsf T}\vect{x}_{\mathcal{P}\parn{i}}}$ & $T(x)$ & $\mathcal{X}$ \\
\hline
Exponential & $-\lambda, \quad (\lambda > 0)$ & $\eta = -e^{\mathbf{w}_i^\mathsf T\mathbf{x}_{\mathcal{P}(i)}}$ & $x$ & $\left[0, \infty\right)$ \\
Poisson & $\log \lambda, \quad (\lambda > 0)$ & $\eta = \mathbf{w}_i^\mathsf T\mathbf{x}_{\mathcal{P}(i)}$ & $x$ & $\{0, 1, \dots\}$  \\
Gaussian (fixed variance $\sigma^2$) & $\frac{\mu}{\sigma^2}, \quad (\mu \in \mathbb{R})$ & $\eta = \mathbf{w}_i^\mathsf T\mathbf{x}_{\mathcal{P}(i)}$ & $x$ & $\mathbb{R}$ \\
Gamma (fixed shape $\alpha$) & $-\beta, \quad (\beta > 0)$ & $\eta = -e^{\mathbf{w}_i^\mathsf T\mathbf{x}_{\mathcal{P}(i)}}$ & $x$ & $(0, \infty)$  \\
Binomial (fixed trials $m \geq 3$) & $\log\frac{p}{1 - p}, \quad (p \in (0,1))$ & $\eta = \mathbf{w}_i^\mathsf T\mathbf{x}_{\mathcal{P}(i)}$ & $x$ & $\{0, \dots, m\}$ \\
Pascal (fixed successes $r$) & $\log(1 - p), \quad (p \in (0,1))$ & $\eta = -\log(1 + e^{\mathbf{w}_i^\mathsf T\mathbf{x}_{\mathcal{P}(i)}})$ & $x$ & $\{0, 1, \dots\}$ \\
Gamma (fixed rate $\beta$) & $\alpha - 1$, $\parn{\alpha > 0}$ & $\eta = e^{\mathbf{w}_i^\mathsf T\mathbf{x}_{\mathcal{P}(i)}} - 1$ & $\log\parn{x}$ & $\parn{0, \infty}$ \\
\hline
\end{tabular*}
\end{table*}

We begin with a two-node system with random variables $X$ and $Y$.
We study two competing causal models $\mathcal{M}_{X \rightarrow Y} \text{ and }\mathcal{M}_{Y \rightarrow X}$, with the edge weight of the single directed edge between $X$ and $Y$ being $w \neq 0$. We assume that $X$ is an ordinal random variable with support $\mathcal{X} = \curl{1, \dots, s}$, and that the conditional distribution of $Y$ belongs to a regular one-parameter exponential family with known sufficient statistic $T\parn{y}$. Under $\mathcal{M}_{X \rightarrow Y}$, $X$ is a root node with probabilities $\pi_x$ and $Y \mid X = x$ has natural parameter $\eta\parn{x} = g\parn{wx}$, so that
\begin{equation}
p_{X,Y}\parn{x,y;\mathcal{M}_{X \rightarrow Y}}
=
\pi_x \, H\parn{y}
\exp\!\parn{
\eta\parn{x} T\parn{y} - a\parn{\eta\parn{x}}
}.
\label{eq:forward_joint}
\end{equation}
Under $\mathcal{M}_{Y \rightarrow X}$, $Y$ is a root node with natural parameter $\eta$ and $X \mid Y = y$ follows the ordered logit with cutpoints $\boldsymbol{\gamma}$ and linear predictor $wy$, so that
\begin{align}
&p_{X,Y}\parn{x,y;\mathcal{M}_{Y \rightarrow X}}
= \nonumber\\
& H\parn{y}
\exp\!\parn{
\eta T\parn{y} - a\parn{\eta}
}
\parn{
\sigma\parn{\gamma_x - wy}
-
\sigma\parn{\gamma_{x-1} - wy}
}.
\label{eq:reverse_joint}
\end{align}

We use the following assumptions throughout the work.
\begin{assumption}
\label{asm:all}
(i) $X$ has at least three ordinal levels, $s \geq 3$; (ii) the support of $Y$ contains at least three distinct points for discrete $Y$ and extends to $+\infty$ for continuous $Y$; (iii) the sufficient statistic $T\parn{y}$ is one of $y$, $\log\parn{y}$, or $y^k$ with fixed $k>0$.
\end{assumption}
Our key results, all under Assumption~\ref{asm:all}, are as follows.

\begin{lem}
\label{lem1}
For
$\mathcal{M}_{X \rightarrow Y}$ and any distinct $u,v \in \mathcal{X}$,
\[
\log\!\parn{\frac{p_{X \mid Y}\parn{u\mid y}}{p_{X \mid Y}\parn{v\mid y}}}
\]
is an affine function of $T\parn{y}$ on the support of $Y$.
\end{lem}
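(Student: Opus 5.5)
Under $\mathcal{M}_{X \rightarrow Y}$ the joint density factorizes as in \eqref{eq:forward_joint}, so we apply Bayes' rule and form the ratio directly. For any $y$ in the support of $Y$ we have $p_Y\parn{y} > 0$: each conditional $p_{Y\mid X}\parn{y\mid x}$ is strictly positive there because $H\parn{y} > 0$ and the exponential is positive. Since $s \geq 3$ and the distribution of a root ordinal node is a proper pmf, we take $\pi_x > 0$ for every level; otherwise the ratio is undefined, so this is implicit in the statement. Then
\[
\frac{p_{X\mid Y}\parn{u\mid y}}{p_{X\mid Y}\parn{v\mid y}}
=
\frac{p_{X,Y}\parn{u,y;\mathcal{M}_{X \rightarrow Y}}}{p_{X,Y}\parn{v,y;\mathcal{M}_{X \rightarrow Y}}},
\]
because the normalizer $p_Y\parn{y}$ cancels.

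Next we substitute \eqref{eq:forward_joint}. The base measure $H\parn{y}$ is common to numerator and denominator and cancels. What remains is
\[
\frac{\pi_u}{\pi_v}\exp\!\parn{\parn{\eta\parn{u}-\eta\parn{v}}T\parn{y} - a\parn{\eta\parn{u}} + a\parn{\eta\parn{v}}}.
\]
Taking logarithms gives $\alpha_{uv} + \beta_{uv} T\parn{y}$, where
\[
\alpha_{uv} = \log\parn{\pi_u/\pi_v} - a\parn{g\parn{wu}} + a\parn{g\parn{wv}}, \qquad \beta_{uv} = g\parn{wu} - g\parn{wv}.
\]
Neither coefficient depends on $y$, so the log-ratio is affine in $T\parn{y}$ on the whole support of $Y$.

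No step is really an obstacle; the only points needing care are well-definedness issues. We need $a\parn{\eta\parn{x}}$ finite, which holds because $g$ maps into the natural parameter space $\Omega$. We need positivity of $\pi_u,\pi_v$ and of $H$. As a remark useful for later results, $\beta_{uv} \neq 0$ whenever $u \neq v$, since $w \neq 0$ and $g$ is strictly monotone; so the affine function is nonconstant. This is presumably what the identifiability argument will contrast with the logistic form of the log-odds under $\mathcal{M}_{Y \rightarrow X}$ in \eqref{eq:reverse_joint}. Assumption~\ref{asm:all} is not otherwise needed for this lemma.
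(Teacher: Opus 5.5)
Your proof is correct and follows the paper's argument: Bayes' rule cancels $p_Y(y)$ and the base measure $H(y)$, leaving the log-ratio $(\eta(u)-\eta(v))T(y) + \log(\pi_u/\pi_v) - (a(\eta(u)) - a(\eta(v)))$, which is affine in $T(y)$. Your additional checks are sound details the paper's sketch leaves implicit: $\pi_u,\pi_v>0$, finiteness of $a$ on $\Omega$, and the nonzero slope $g(wu)\neq g(wv)$. The one slip is that $s\geq 3$ plays no role in the positivity of $\pi_x$.
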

\begin{proof}[Proof sketch]
By Bayes' rule and \eqref{eq:forward_joint}, the base measure $H\parn{y}$ cancels and the log-ratio equals $\parn{\eta_u - \eta_v}\,T\parn{y} + \log\parn{\pi_u/\pi_v} - \parn{a\parn{\eta_u} - a\parn{\eta_v}}$, which is affine in $T\parn{y}$.
\end{proof}

\begin{lem}
Under $\mathcal{M}_{Y \rightarrow X}$ with the ordered logit cumulative-link model, for any distinct categories $u, v \in \mathcal{X}$, the posterior ratio
\[
R\parn{y; u, v} \triangleq \frac{p_{X \mid Y}\parn{u \mid y}}{p_{X \mid Y}\parn{v \mid y}}
\]
is a strictly positive rational function of $e^{wy}$ on the support of $Y$. Its leading-order behavior as $e^{wy} \to 0^{+}$ and $e^{wy} \to \infty$ depends only on whether $u$ or $v$ lies at the boundary of $\mathcal{X}$: if both are interior, $R$ tends to a positive constant at both ends; if exactly one is a boundary category, $R \sim c\, e^{\pm wy}$ at one end and tends to a positive constant at the other; if both are boundary, $R \sim c\, e^{\pm wy}$ at both ends. The sign of the exponent is determined by which side carries the boundary category, and all constants are strictly positive and finite, depending only on the cutpoints $\boldsymbol{\gamma}$.
\label{lem:myx_asymptote}
\end{lem}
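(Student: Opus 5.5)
Under $\mathcal{M}_{Y \rightarrow X}$ the conditional $p_{X\mid Y}(x\mid y)$ is given directly by the ordered logit, so the plan is to compute the ratio explicitly in the variable $z \triangleq e^{wy} > 0$. Writing $c_j \triangleq e^{\gamma_j}$ for $j=1,\dots,s-1$, with the conventions $c_0 = 0$ and $c_s = \infty$, we have
\[
\sigma\parn{\gamma_j - wy} = \frac{c_j}{c_j + z}.
\]
For an interior category $1 < x < s$ this gives
\[
p_{X\mid Y}\parn{x\mid y} = \frac{c_x}{c_x+z} - \frac{c_{x-1}}{c_{x-1}+z} = \frac{(c_x - c_{x-1})\, z}{(c_x+z)(c_{x-1}+z)}.
\]
The boundary categories give $p(1\mid y) = c_1/(c_1+z)$ and $p(s\mid y) = z/(c_{s-1}+z)$. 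Each expression is a rational function of $z$ with positive coefficients. Strict ordering of the cutpoints gives $c_x > c_{x-1}$, so each is strictly positive for $z>0$. The ratio $R(y;u,v)$ of two such expressions is therefore a strictly positive rational function of $e^{wy}$ on the support of $Y$; the support of $Y$ plays no role beyond $z$ ranging over $(0,\infty)$.

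For the asymptotics I will tabulate each conditional probability's leading behavior as $z\to 0^+$ and $z\to\infty$:
\begin{itemize}
\item Interior $x$: $p \sim \frac{c_x-c_{x-1}}{c_xc_{x-1}}\, z$ as $z\to0^+$, and $p \sim (c_x - c_{x-1})\, z^{-1}$ as $z\to\infty$.
\item $x=1$: $p \to 1$ as $z\to 0^+$, and $p \sim c_1 z^{-1}$ as $z\to\infty$.
\item $x=s$: $p \sim z/c_{s-1}$ as $z \to 0^+$, and $p \to 1$ as $z\to\infty$.
\end{itemize}
Taking ratios then settles the cases.
\begin{itemize}
\item Both $u,v$ interior: the powers of $z$ cancel at both ends, so $R$ tends to a positive constant at each end.
\item Exactly one of $u,v$ is a boundary category: the powers cancel at one end only, and at the other end the exponents differ by one, giving $R\sim c\, z^{\pm1} = c\, e^{\pm wy}$.
\item $\{u,v\}=\{1,s\}$: $R = p(1\mid y)/p(s\mid y) = c_1(c_{s-1}+z)/\parn{(c_1+z)z}$, which behaves like $c_{s-1}/z$ at $0$ and $c_1/z$ at $\infty$, so $R \sim c\,e^{\pm wy}$ at both ends.
\end{itemize}
In every case the sign of the exponent is determined by which side ($u$ or $v$, lower or upper boundary) carries the boundary category. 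The constants are explicit products and quotients of $c_j$ and $c_j - c_{j-1}$. They are strictly positive and finite, and they depend only on $\boldsymbol{\gamma}$.

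The main obstacle is bookkeeping rather than analysis: stating the case split cleanly and keeping the signs consistent. I will present the table of leading orders and read off all cases from it. For the both-boundary case I will note explicitly that, in the $1$-versus-$s$ ratio, the two ends contribute the same power with different constants. The limits $z \to 0^+$ and $z \to \infty$ correspond to $wy\to\mp\infty$. When the support of $Y$ is bounded on one side, I will read the limit statements formally in $z$, as the lemma phrases them in terms of $e^{wy}$.
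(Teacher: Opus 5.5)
Your proposal is correct and follows essentially the same route as the paper. Both substitute $t=e^{wy}$, write the closed forms for $p(1\mid y)$, $p(s\mid y)$ and the interior categories, and read off the three cases by comparing leading orders at $t\to 0^+$ and $t\to\infty$. Your explicit constants, such as $c_{s-1}/z$ versus $c_1/z$ in the $\{1,s\}$ case, are exactly what the subsequent non-affinity lemma uses, so the extra bookkeeping is worthwhile.
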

\begin{proof}[Proof sketch]
With $t = e^{wy}$, the ordered logit gives $p_{X\mid Y}\parn{1\mid y} = e^{\gamma_1}/\parn{e^{\gamma_1}+t}$, $p_{X\mid Y}\parn{s\mid y} = t/\parn{e^{\gamma_{s-1}}+t}$, and $p_{X\mid Y}\parn{x\mid y} = \parn{e^{\gamma_x}-e^{\gamma_{x-1}}}\,t/\sqr{\parn{e^{\gamma_x}+t}\parn{e^{\gamma_{x-1}}+t}}$ for interior $x$; boundary and interior categories have different leading orders in $t$ at each end, and forming ratios yields the three cases.
\end{proof}

\begin{lem}
\label{lem2}
There do not exist parameters $\parn{\boldsymbol{\gamma}, w}$ for which
\[
\log\!\parn{\frac{p_{X \mid Y}\parn{u\mid y}}{p_{X \mid Y}\parn{v\mid y}}}
\]
is affine in $T\parn{y}$, on a continuous interval of $y$, for any distinct $u,v \in \mathcal{X}$. Furthermore, for discrete $Y$, this affine relationship cannot hold for generic parameters $\parn{\boldsymbol{\gamma}, w}$.
\end{lem}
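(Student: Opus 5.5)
Write $t = e^{wy}$ and use the explicit form of $R(y;u,v)$ from Lemma~\ref{lem:myx_asymptote}. Suppose, for contradiction, that $\log R(y;u,v) = \alpha T(y) + \beta$ on an interval $I$ of $y$. Both sides are real-analytic in $y$ on the interior of the support. The left side is a log of a positive rational function of $e^{wy}$; the right side is analytic for $T(y)\in\{y,\log y, y^k\}$ once $y>0$ where needed. So the identity extends by analytic continuation from $I$ to the whole connected component of the support, which is a half-line reaching $+\infty$ by Assumption~\ref{asm:all}(ii).

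I then compare growth as $y\to+\infty$, i.e.\ $t\to\infty$ if $w>0$ or $t\to 0^+$ if $w<0$. By Lemma~\ref{lem:myx_asymptote}, $\log R$ either tends to a finite constant or behaves like $\pm w y + c + o(1)$.
\begin{itemize}
\item If $T(y)=\log y$ or $T(y)=y^k$ with $k\neq 1$, the functions $\alpha T(y)+\beta$ are either constant or grow unlike $\pm wy + c$. Matching asymptotics therefore forces $\alpha=0$ in the first case. In the second case no $\alpha$ works, since $\alpha\log y$ or $\alpha y^k$ cannot equal $\pm wy+c+o(1)$ when $w\ne0$.
\item If $T(y)=y$, the asymptotics can match: take $\alpha\in\{0,\pm w\}$. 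Then $\log R - \alpha y - \beta$ is the log of a rational function of $t$ that vanishes identically. I will show this is impossible by direct factor inspection. For example, take $u=1$, $v=s$: then $R = e^{\gamma_1}(e^{\gamma_{s-1}}+t)/\big(t(e^{\gamma_1}+t)\big)$, and $R\,t = e^{\gamma_1}(e^{\gamma_{s-1}}+t)/(e^{\gamma_1}+t)$ is constant only if $\gamma_1=\gamma_{s-1}$. This contradicts strict ordering because $s\ge3$.
\item The interior/boundary and interior/interior cases go the same way. The rational function $R\,t^{m}$ (with $m\in\{-1,0,1\}$) has numerator and denominator factors $(e^{\gamma_j}+t)$ with distinct cutpoints. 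Since $s\ge 3$, at least one factor fails to cancel, so $R\,t^m$ has a genuine pole or zero at some $t=-e^{\gamma_j}$ and cannot be constant.
\end{itemize}
The case $\alpha=0$ is included: $R$ constant is excluded by the same pole argument. This settles the continuous claim.

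For discrete $Y$, analytic continuation is unavailable. By Assumption~\ref{asm:all}(ii) the support contains three distinct points $y_1,y_2,y_3$. Affinity in $T$ at three points is one polynomial equation:
\[
\Phi(\boldsymbol\gamma,w)=\det\begin{pmatrix}1 & T(y_1) & \log R(y_1)\\ 1& T(y_2)&\log R(y_2)\\ 1&T(y_3)&\log R(y_3)\end{pmatrix}=0 .
\]
$\Phi$ is real-analytic in $(\boldsymbol\gamma,w)$ on the connected open parameter domain (ordered cutpoints, $w\neq0$ split into two components). Its zero set is therefore of measure zero, i.e.\ non-generic, as long as $\Phi\not\equiv 0$ on each component. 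I would exhibit one parameter point with $\Phi\ne0$ on each component. A natural choice is $|w|$ large, where by Lemma~\ref{lem:myx_asymptote} $\log R(y_i)$ is roughly piecewise linear in $w y_i$ with kinks determined by the cutpoints. Placing the cutpoints so that $wy_2$ sits between them while $wy_1,wy_3$ lie outside produces a non-collinear triple.

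The main obstacle is this last step: verifying $\Phi\not\equiv0$ uniformly over all pairs $(u,v)$ and all admissible $T$ with one explicit choice. I would handle it using the asymptotic regimes to force curvature. If that proves messy, the fallback is to differentiate $\Phi$ in a single cutpoint and show the derivative is nonzero at a convenient point.
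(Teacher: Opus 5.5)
Your proposal is correct and follows essentially the paper's route. It uses analytic continuation to a half-line, then a growth comparison at $+\infty$ that forces $\alpha=0$ (or $\alpha\in\{0,\pm w\}$ when $T(y)=y$), then a contradiction from strictly ordered cutpoints and $s\ge 3$, and finally a real-analytic collinearity determinant $\Phi$ for discrete $Y$. The one difference is that you read the contradiction off a surviving zero or pole of $Rt^{m}$ at some $t=-e^{\gamma_j}$, whereas the paper matches leading exponents at both ends $t\to 0^{+}$, $t\to\infty$ and then checks the few surviving cases.

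The paper simply asserts $\Phi\not\equiv 0$, and your explicit large-$|w|$ construction needs a genericity check. In the piecewise-linear limit the triple is still collinear for a particular position of $wy_2$ between the kinks, so placing $wy_2$ between the cutpoints does not by itself guarantee non-collinearity. A cleaner closure uses two symmetries: shifting all cutpoints by a common $\delta$ translates $\log R$ in the variable $wy$, and varying $w>0$ dilates it. So $\Phi\equiv 0$ on a component would impose the three-point relation at every translate and dilate of $(wy_1,wy_2,wy_3)$. Differentiating in the dilation at $0^{+}$ then makes $\log R$ affine in $wy$, which contradicts your $T(y)=y$ continuous case.
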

\begin{proof}[Proof sketch]
Suppose $R\parn{y;u,v} = e^{\beta}e^{\alpha T\parn{y}}$ on the support; if $Y$ is continuous, real-analyticity extends this to a half-line. For $T\parn{y}=y$, this reads $R = e^{\beta}t^{r}$ with $r=\alpha/w$; matching leading-order exponents from Lemma~\ref{lem:myx_asymptote} at $t\to 0^+$ and $t\to\infty$ forces $r\in\curl{-1,0,+1}$. $r=0$ forces coinciding cutpoints, contradicting $u\neq v$; $r=\pm 1$ forces $\curl{u,v}=\curl{1,s}$ and then $e^{\gamma_1}=e^{\gamma_{s-1}}$, contradicting $s\ge 3$. For $T\parn{y}=\log y$ or $y^{k}$ ($k\neq 1$), $\log R$ would grow as $\Theta\parn{\log y}$ or $\Theta\parn{y^{k}}$, incompatible with the $O\parn{1}$ or $\Theta\parn{y}$ growth of $\log R$ under Lemma~\ref{lem:myx_asymptote}, forcing $\alpha=0$ and reducing to the constant case. For discrete $Y$, three support points yield a collinearity condition $\Phi\parn{\boldsymbol{\gamma},w}=0$ with $\Phi$ real-analytic and not identically zero, so it fails outside a Lebesgue-null set.
\end{proof}

\begin{thm}
\label{thm1}
The models $\mathcal{M}_{X \rightarrow Y}$ and
$\mathcal{M}_{Y \rightarrow X}$ are distributionally identifiable (generically identifiable when $Y$ is discrete).
\end{thm}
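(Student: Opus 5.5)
We argue by contradiction, using the posterior log-ratio as a distributional invariant that separates the two models. Suppose some parameter choice $\parn{\boldsymbol{\pi}, w, g}$ for $\mathcal{M}_{X \rightarrow Y}$ and some $\parn{\eta, \boldsymbol{\gamma}, w'}$ for $\mathcal{M}_{Y \rightarrow X}$, with $w,w'\neq 0$, induce the same joint law $p_{X,Y}$. Then they share the same marginal of $Y$, and on the support of $Y$ they share the same conditional $p_{X\mid Y}\parn{\cdot \mid y}$. The two marginals of $Y$ in \eqref{eq:forward_joint} and \eqref{eq:reverse_joint} are positive wherever $H\parn{y}>0$, so the conditionals are well defined there. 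Consequently, for any fixed distinct $u,v\in\mathcal{X}$, the function $y\mapsto \log\parn{p_{X\mid Y}\parn{u\mid y}/p_{X\mid Y}\parn{v\mid y}}$ is the same under both models on the support of $Y$.

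By Lemma~\ref{lem1}, under $\mathcal{M}_{X \rightarrow Y}$ this log-ratio equals $\alpha T\parn{y}+\beta$ for constants $\alpha=\eta_u-\eta_v$ and $\beta$. Transporting this to $\mathcal{M}_{Y\rightarrow X}$ means the ordered-logit posterior ratio $R\parn{y;u,v}$ of Lemma~\ref{lem:myx_asymptote}, with parameters $\parn{\boldsymbol{\gamma},w'}$, has log affine in $T\parn{y}$ on the support.

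In the continuous case, Assumption~\ref{asm:all}(ii) guarantees the support contains a half-line, in particular a continuous interval. The first part of Lemma~\ref{lem2} then says no $\parn{\boldsymbol{\gamma},w'}$ makes this possible for any pair $u\neq v$, which gives the contradiction. The only care needed is to confirm that the choice $u\ne v$ is available, which holds since $s\ge 3$, and that Lemma~\ref{lem2} covers every admissible $T$ under Assumption~\ref{asm:all}(iii). Hence no parameter pair of the two models yields the same distribution, which is distributional identifiability.

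In the discrete case, the argument is the same but uses the second part of Lemma~\ref{lem2}. Take three support points $y_1,y_2,y_3$ with distinct $T$ values; these exist by Assumption~\ref{asm:all}(ii) and injectivity of $T$ on the support for the listed choices of $T$. Affinity of $\log R$ in $T$ then amounts to a collinearity condition $\Phi\parn{\boldsymbol{\gamma},w'}=0$. For generic $\parn{\boldsymbol{\gamma},w'}$ this fails, so outside a Lebesgue-null set of reverse-model parameters no forward model reproduces the reverse joint law. In the other direction, a forward model could match a reverse model only if that reverse model's parameters lie in the null set, so every generic reverse parameter is excluded and the models are generically identifiable.

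The main obstacle is making "generic" precise and symmetric. The exceptional set is defined in terms of the reverse-model parameters, so I would state genericity as holding over the joint parameter space of both models: the set of $\parn{\boldsymbol{\gamma},w'}$ on which the two models can coincide is contained in the zero set of a nontrivial real-analytic $\Phi$, which is null. A secondary point to check is that equality of joint laws gives equality of conditionals on all three chosen support points, which holds because $H>0$ on the support.
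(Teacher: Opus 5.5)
Your proposal is correct and takes essentially the same route as the paper. Equal joints give equal posteriors $p_{X\mid Y}$ on the support. Lemma~\ref{lem1} then forces each log-posterior ratio to be affine in $T\parn{y}$, and Lemma~\ref{lem2} rules this out: outright for continuous $Y$, and for generic $\parn{\boldsymbol{\gamma},w'}$ with $w'\neq 0$ when $Y$ is discrete. Your extra bookkeeping on the continuous/discrete split and on the exceptional null set lying in the reverse-model parameters is consistent with the paper's sketch, which leaves these points implicit.
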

\begin{proof}[Proof sketch]
Suppose forward parameters $\parn{\boldsymbol{\eta},\boldsymbol{\pi}}$ and reverse parameters $\parn{\eta,\boldsymbol{\gamma},w}$ yield the same joint, hence the same posteriors $p_{X\mid Y}\parn{x\mid y}$. By Lemma~\ref{lem1}, every log-posterior ratio is affine in $T\parn{y}$, which Lemma~\ref{lem2} rules out for $w\neq 0$ (generically, for discrete $Y$). Hence no such reverse parameters exist.
\end{proof}

\section{Algorithms for Mixed DAG Discovery}
\label{sec:alg}

\begin{algorithm}[!htbp]
\caption{Masked DAGMA with m-NLL Score for Large DAG Discovery}
\label{alg:dagma_nll}
\begin{algorithmic}[1]
\Require Data matrix $\vect{X}$, number of nodes $d \geq 6$,
         bipartite partition $(\mathcal{V}_{\operatorname{ord}}, \mathcal{V}_{\operatorname{exp}})$,
         initial central path coefficient $\mu^{(0)}$ (e.g., 1),
         decay factor $\alpha \in (0,1)$ (e.g., 0.1),
         $\ell_1$ parameter $\lambda > 0$ (e.g., 0.01),
         log-det parameter $\tau > 0$ (e.g., 1),
         number of iterations $T$,
         threshold $\omega > 0$ (e.g., 0.3)
\Ensure Estimated DAG $\mathcal{G}_{\operatorname{est}}^{*}$ induced by 
        $\vect{W}_{\operatorname{est}}^{*}$

\State Construct bipartite mask $\vect{M} \in \{0,1\}^{d \times d}$ where 
       $[\vect{M}]_{i,j} = 0$ if nodes $i$ and $j$ belong to the same type,
       and $[\vect{M}]_{i,j} = 1$ otherwise

\State Initialize $\vect{W}_{\operatorname{est}}^{(0)} = \mathbf{0}_{d \times d}$ and 
       $\curl{\boldsymbol{\gamma}_i^{(0)}}_{i \in \mathcal{V}_{\operatorname{ord}}} = \vect{0}$ 

\For{$t = 0, 1, 2, \ldots T-1$}
    \State Starting at $\vect{W}_{\operatorname{est}}^{(t)}$ and 
           $\curl{\boldsymbol{\gamma}_i^{(t)}}_{i \in \mathcal{V}_{\operatorname{ord}}}$, solve
    \begin{eqnarray}
        &&\!\!\!\!\!\!\!\!\!\!\!\!\!\!\!\!\!\parn{\vect{W}_{\operatorname{est}}^{(t+1)}, \curl{\boldsymbol{\gamma}_i^{(t+1)}}} 
        = \nonumber \\ &&\!\!\!\!\!\!\!\!\!\!\!\arg\min_{\vect{W}, \curl{\boldsymbol{\gamma}_i}} \ \mu^{(t)} 
        Q\parn{\vect{W}, \curl{\boldsymbol{\gamma}_i}; \vect{X}}
        + h\parn{\vect{W} \odot \vect{M}} \nonumber\\
        &&\!\!\!\!\!\!\!\!\!\!\!\text{where} \quad Q\parn{\vect{W}, \curl{\boldsymbol{\gamma}_i}; \vect{X}}\nonumber \\
        && \qquad = 
        e\parn{\vect{W} \odot \vect{M}, \curl{\boldsymbol{\gamma}_i}_{i \in \mathcal{V}_{\operatorname{ord}}}; \vect{X}}
         + \lambda\|\vect{W} \odot \vect{M}\|_1 \nonumber
    \end{eqnarray}
    \State Set $\mu^{(t+1)} = \alpha \mu^{(t)}$
\EndFor

\State Apply bipartite mask to final iterate: 
       $\vect{W}_{\operatorname{est}}^{(T)} \gets \vect{W}_{\operatorname{est}}^{(T)} \odot \vect{M}$

\State Threshold small entries: 
       $[\vect{W}_{\operatorname{est}}^{*}]_{i,j} \gets 
       [\vect{W}_{\operatorname{est}}^{(T)}]_{i,j} \cdot 
       \mathbf{1}\sqr{|[\vect{W}_{\operatorname{est}}^{(T)}]_{i,j}| \geq \omega}$

\State \Return $\mathcal{G}_{\operatorname{est}}^{*}$

\end{algorithmic}
\end{algorithm}

Let $\mathcal{V} = \mathcal{V}_{\operatorname{ord}} \cup \mathcal{V}_{\operatorname{exp}}$ be partitioned into ordinal and exponential family nodes, with edges permitted only between nodes of different types, so $\mathcal{G}$ is bipartite; we encode this with the mask $\vect{M} \in \curl{0,1}^{d \times d}$, where $\sqr{\vect{M}}_{i,j} = 0$ if $i$ and $j$ are of the same type and $1$ otherwise. For a candidate graph $\mathcal{G}_{\operatorname{est}}$ with weighted adjacency $\vect{W}_{\operatorname{est}}$ and parent sets $\mathcal{P}_{\mathcal{G}_{\operatorname{est}}}\parn{i}$, we score the fit by the mean negative log-likelihood (m-NLL)
\begin{IEEEeqnarray}{rCl}
\IEEEeqnarraymulticol{3}{l}{e\parn{\vect{W}_{\operatorname{est}}, \curl{\boldsymbol{\gamma}_i}_{i \in \mathcal{V}_{\operatorname{ord}}}; \vect{X}}} \nonumber \\
\quad &\triangleq& -\frac{1}{n}\sum_{j = 1}^{n}\sum_{i = 1}^{d}\ln p\parn{\vect{X}_{j,i} \mid \vect{X}_{j, \mathcal{P}_{\mathcal{G}_{\operatorname{est}}}\parn{i}}; \vect{W}_{\operatorname{est}}, \boldsymbol{\gamma}_i},
\label{eq:err_fn}
\end{IEEEeqnarray}
where $\boldsymbol{\gamma}_i$ are the cutpoints of ordinal node $i$; \eqref{eq:err_fn} decomposes over nodes. We use two strategies depending on $d$.

\subsection{Exhaustive search for small graphs} For $d \leq 5$ we enumerate the feasible set $\mathcal{U}'\parn{d}$ of all acyclic binary matrices $\vect{W}_{\operatorname{est},B} \in \curl{0,1}^{d \times d}$ satisfying $\vect{W}_{\operatorname{est},B} \odot \parn{\mathbf{1} - \vect{M}} = \mathbf{0}$, which always contains the true graph, and solve
\begin{equation}
\mathcal{G}_{\operatorname{est}}^{*} = \arg\min_{\mathcal{G}_{\operatorname{est}} \in \mathcal{U}'\parn{d}} e\parn{\vect{W}_{\operatorname{est}}, \curl{\boldsymbol{\gamma}_i}_{i \in \mathcal{V}_{\operatorname{ord}}}; \vect{X}}.
\label{eq:es_opt}
\end{equation}
For each candidate skeleton, the edge weights of exponential family nodes and the joint weights and cutpoints of ordinal nodes are obtained by per-node conditional maximum likelihood via L-BFGS-B, with the cutpoints reparametrized through softplus increments to enforce strict ordering. Exhaustive search returns the exact minimizer of \eqref{eq:es_opt}, so any recovery error is attributable to finite-sample noise alone.

\subsection{Masked continuous optimization for large graphs} Since $\bars{\mathcal{U}'\parn{d}}$ grows super-exponentially in $d$, for $d \geq 6$ we embed structure learning into a continuous optimization over weighted adjacency matrices using DAGMA \cite{bello2022dagma} as the backbone. The bipartite constraint is enforced throughout by optimizing over the masked parameters $\vect{W} \odot \vect{M}$, since the $\ell_1$ penalty alone does not guarantee a bipartite solution:
\begin{IEEEeqnarray}{rCl}
\vect{W}_{\operatorname{est}}^{*} &=& \arg\min_{\vect{W} \in \mathbb{R}^{d \times d},\, \curl{\boldsymbol{\gamma}_i}} e\parn{\vect{W} \odot \vect{M}, \curl{\boldsymbol{\gamma}_i}; \vect{X}} + \lambda\norm{\vect{W} \odot \vect{M}}_1 \nonumber \\
&& \text{subject to} \quad h\parn{\vect{W} \odot \vect{M}} = 0,
\label{eq:dagma_opt}
\end{IEEEeqnarray}
where $\lambda > 0$ controls sparsity and $h\parn{\vect{W}} = -\log\det\parn{\tau\vect{I}_d - \vect{W} \circ \vect{W}} + d\log\tau$, $\tau > 0$, vanishes if and only if $\vect{W}$ is acyclic. The cutpoints are optimized jointly with $\vect{W}$, since the skeleton is itself a product of the optimization. Denoting the penalized objective by $Q\parn{\vect{W}, \curl{\boldsymbol{\gamma}_i}; \vect{X}} \triangleq e\parn{\vect{W} \odot \vect{M}, \curl{\boldsymbol{\gamma}_i}; \vect{X}} + \lambda\norm{\vect{W} \odot \vect{M}}_1$, we solve \eqref{eq:dagma_opt} by the central path method of \cite{bello2022dagma}, with the mask applied before every loss, gradient, and acyclicity evaluation, as summarized in Algorithm~\ref{alg:dagma_nll}.

\begin{figure*}[t]
    \centering
    \subfigure[$\mathcal{G}_1: X_1 \rightarrow X_2 \rightarrow X_3$]{
        \includegraphics[width=0.42\textwidth]{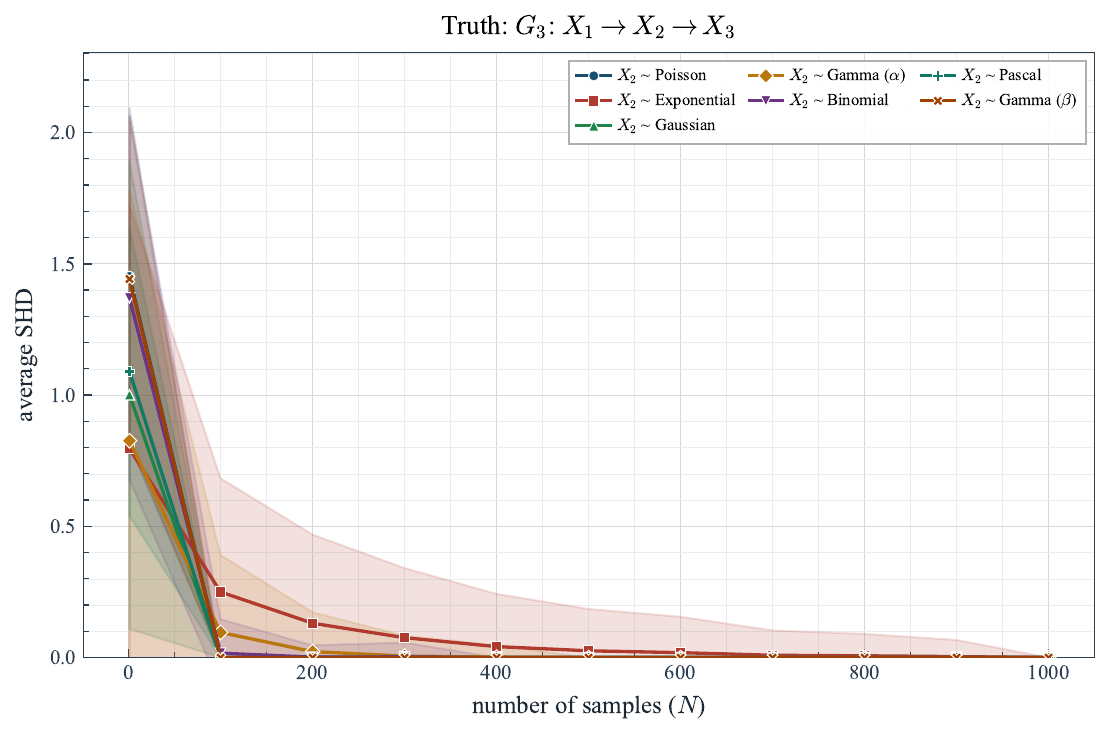}
        \label{fig:g1_shd}
    }
    \hfill
    \subfigure[$\mathcal{G}_2: X_1 \rightarrow X_2 \leftarrow X_3$]{
        \includegraphics[width=0.42\textwidth]{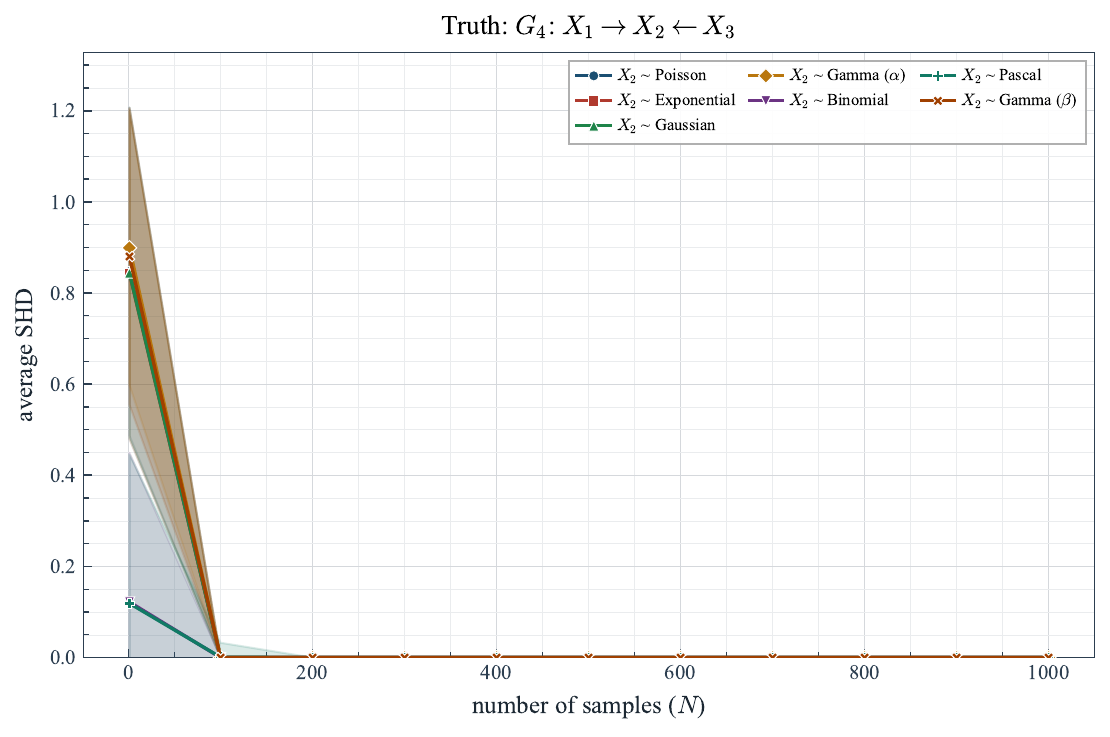}
        \label{fig:g2_shd}
    }

    \subfigure[$\mathcal{G}_3: X_1 \leftarrow X_2 \rightarrow X_3$]{
        \includegraphics[width=0.42\textwidth]{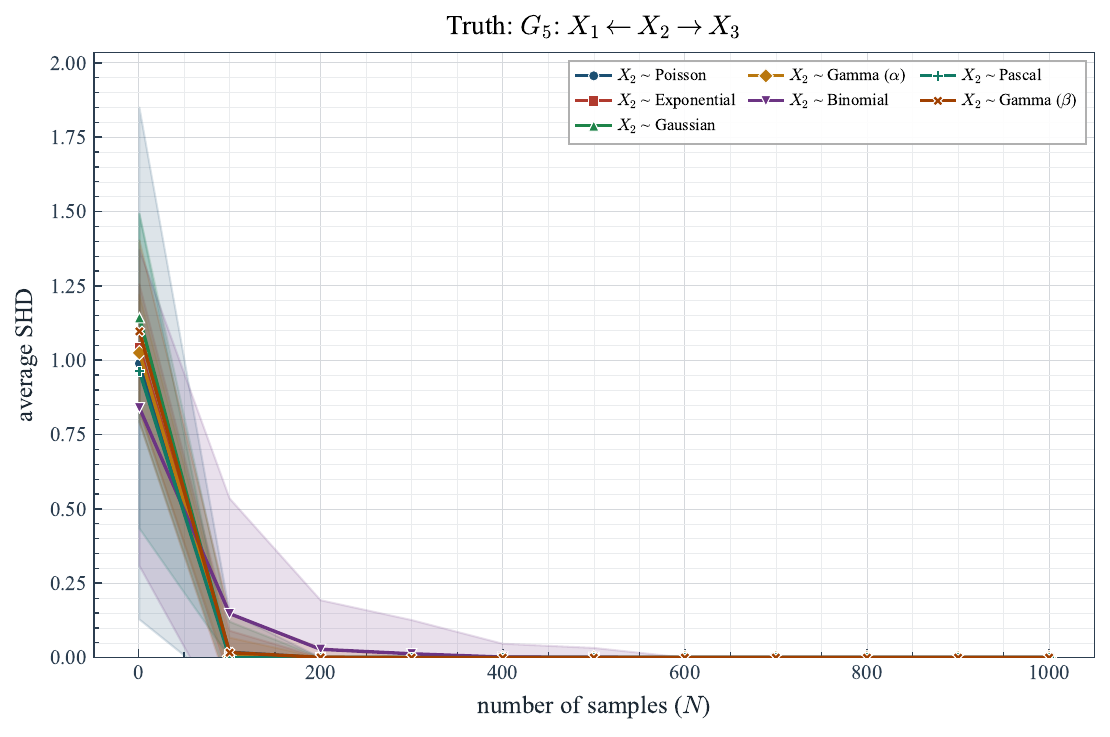}
        \label{fig:g3_shd}
    }
    \hfill
    \subfigure[$\mathcal{G}_4: X_1 \leftarrow X_2 \leftarrow X_3$]{
        \includegraphics[width=0.42\textwidth]{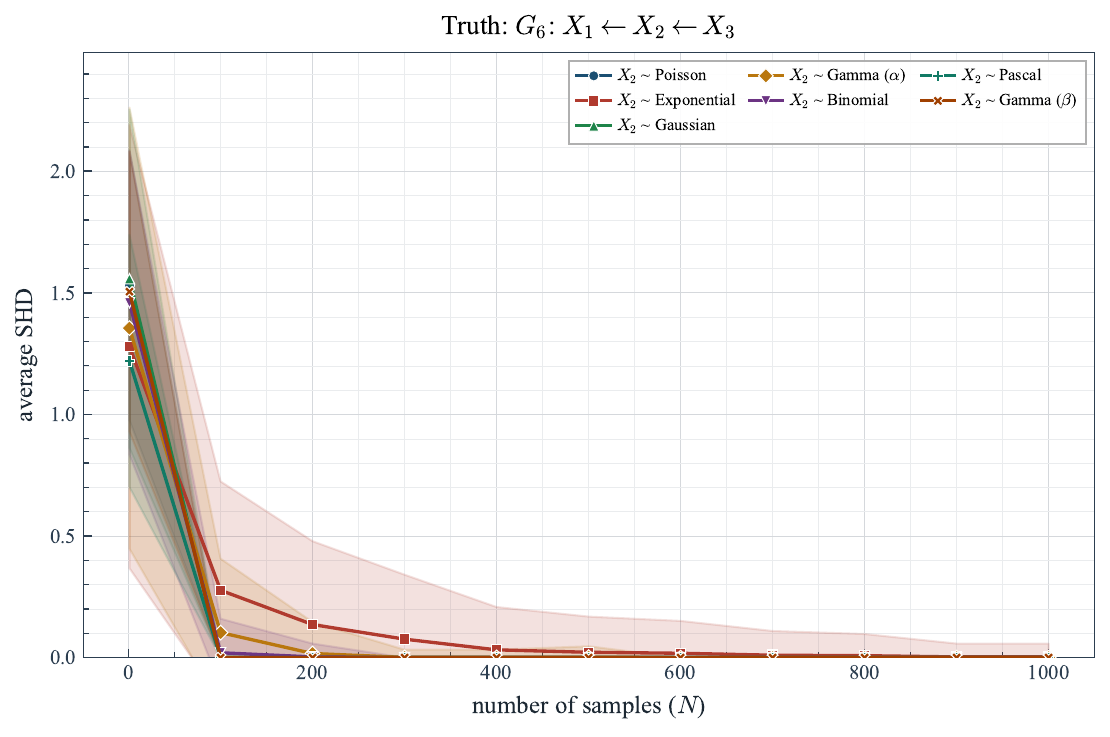}
        \label{fig:g4_shd}
    }
    \caption{Normalized SHD versus number of samples for the four three-node graphs with edge weights in $\curl{0, 1}$.}
    \label{fig:three_node_shd}
\end{figure*}

\begin{figure*}[t]
    \centering
    \subfigure[ER-2]{
        \includegraphics[width=0.48\textwidth]{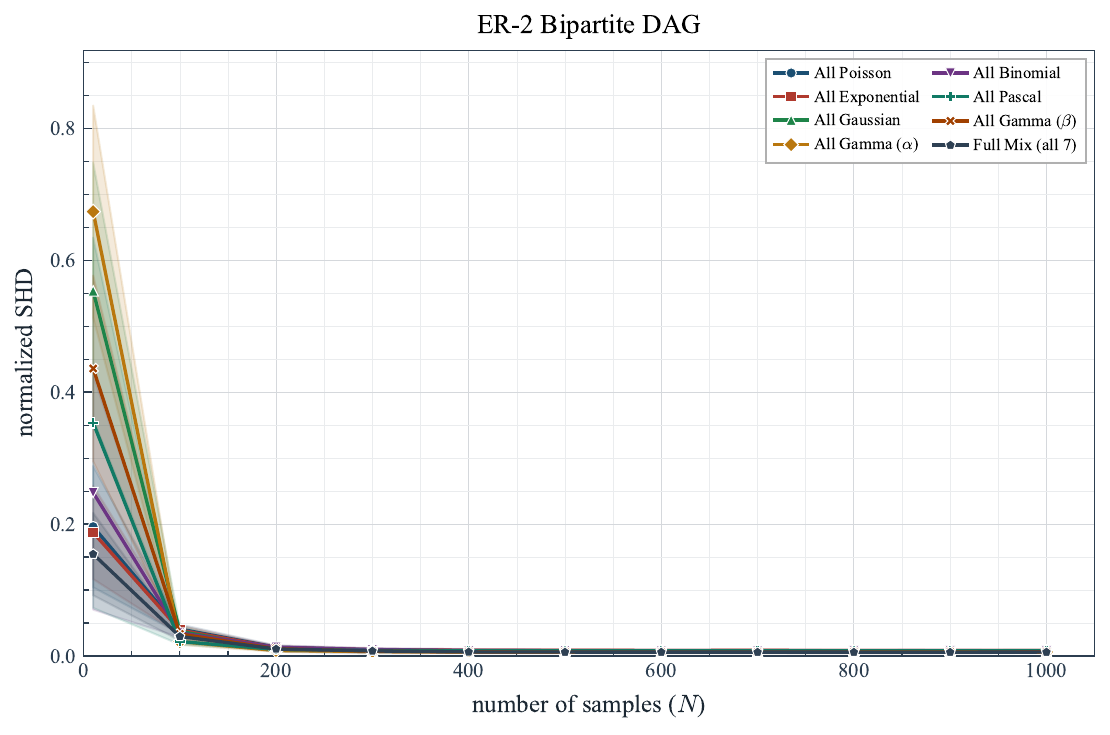}
        \label{fig:d50_er2_nshd}
    }
    \hfill
    \subfigure[ER-4]{
        \includegraphics[width=0.48\textwidth]{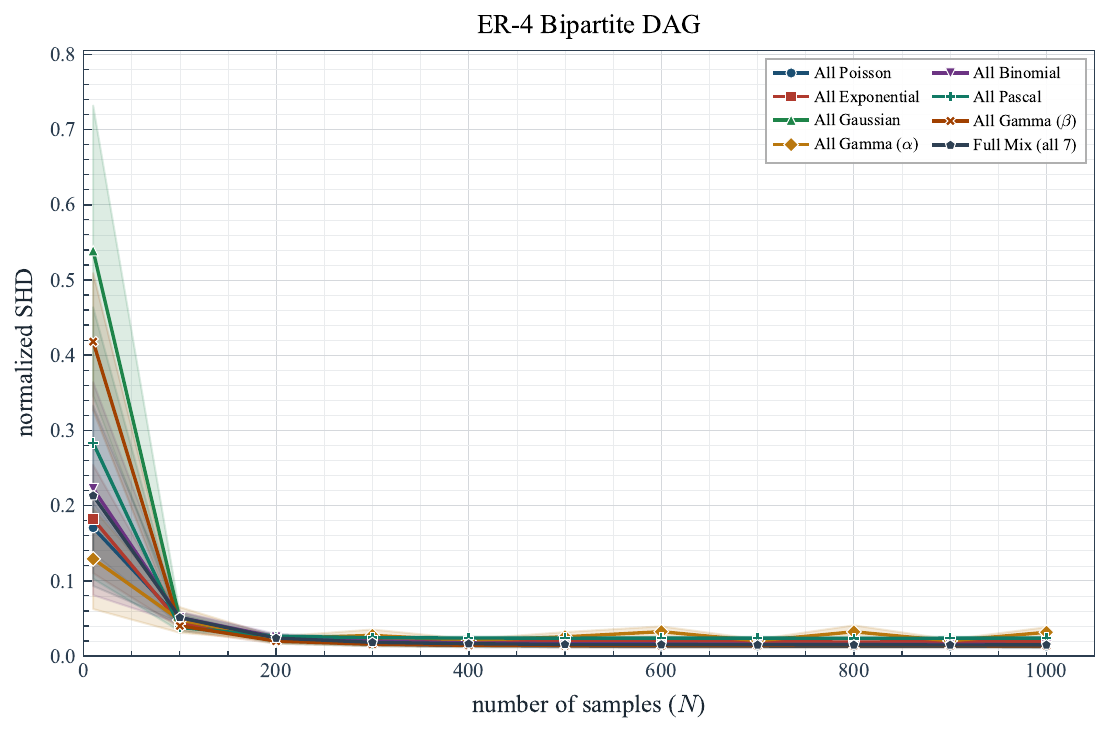}
        \label{fig:d50_er4_nshd}
    }
    \caption{Normalized SHD versus number of samples for $d = 50$ node bipartite DAGs recovered by masked DAGMA.}
    \label{fig:d50_dagma}
\end{figure*}

\section{Numerical Results and Discussion}
\label{sec:num}

We validate the identifiability result using the algorithms of Section~\ref{sec:alg}: exhaustive search on the $d = 3$ case, the smallest setting containing distinct Markov equivalence classes, and masked DAGMA on $d = 50$ bipartite DAGs. Classically, graphs within a single MEC are not identifiable; Theorem~\ref{thm1} predicts that the SSM breaks this symmetry for ordinal-exponential family edges. We measure recovery by the structural Hamming distance (SHD), which counts all errors in the presence/absence and direction of edges, normalized by the number of possible directed edges (nSHD). We consider four ground truth DAGs:
\begin{align*}
    \mathcal{G}_1&: X_1 \rightarrow X_2 \rightarrow X_3, &
    \mathcal{G}_2&: X_1 \rightarrow X_2 \leftarrow X_3, \\
    \mathcal{G}_3&: X_1 \leftarrow X_2 \rightarrow X_3, &
    \mathcal{G}_4&: X_1 \leftarrow X_2 \leftarrow X_3.
\end{align*}
$\mathcal{G}_1$, $\mathcal{G}_3$, and $\mathcal{G}_4$ share a MEC and are classically unidentifiable from observations alone, whereas Theorem~\ref{thm1} predicts that all four are distinguishable under the SSM. Here $X_2$ is the exponential family node, drawn from each of the seven distributions in Table~\ref{tab:exp_fam_dists} in turn, and $X_1$, $X_3$ are ordinal with $s = 4$ categories.

Figure \ref{fig:three_node_shd} shows nSHD vs the number of samples for these four graphs, with one curve per distribution of $X_2$, all edge weights in $\curl{0, 1}$, and $N$ ranging from $1$ to $1000$. For all $4$ DAGs and all seven distributions, the nSHD approaches $0$ as the number of samples increases, empirically demonstrating the identifiability of edge directions between ordinal and exponential-family nodes. Figure \ref{fig:d50_dagma} presents the results for the $50$ node bipartite DAG case, which uses the masked DAGMA algorithm, where ER-$k$ denotes Erd\H{o}s--R\'enyi graphs with $kd$ expected edges. We consider eight distribution modes: seven homogeneous modes in which all exponential family nodes share one distribution from Table~\ref{tab:exp_fam_dists}, and a mixed mode in which each exponential family node is assigned one of the seven uniformly at random. Across both ER-2 and ER-4 settings, the nSHD decreases sharply with increasing $N$, approaching zero by moderate sample sizes for most modes, confirming that the masked DAGMA estimator recovers both the sparsity pattern and the edge orientations as the data grows.

\section{Conclusions}
\label{sec:conc}
We establish identifiability for causal discovery in DAGs whose nodes follow either ordinal or regular one-parameter exponential family distributions: the direction of the edge between an ordinal node and an exponential family node in a bivariate SSM is uniquely determined from the joint distribution for generic parameter values, subsuming the Ordinal-Poisson result of \cite{shaska2025ordinal}. We develop an exhaustive search for small DAGs and a masked DAGMA optimization for larger DAGs, and our numerical results confirm the decay of nSHD to zero with sample size for three-node and $50$-node bipartite DAGs, signaling recovery of the true DAG even within the MEC.

\bibliographystyle{IEEEtran}
\bibliography{references}

\end{document}